\newtheorem{thm}{Theorem}
\newtheorem{cor}{Corollary}
\title{Sequential Online Chore Division for Autonomous Vehicle Convoy Formation}
\author{
Harel Yedidsion$^1$  \and
Shani Alkoby$^2$\and
Peter Stone$^{1,3}$
\affiliations
$^1$The University of Texas at Austin\\
$^2$Ariel University\\
$^3$Sony AI\\
\emails
harel@cs.utexas.edu,
shania@ariel.ac.il,
pstone@cs.utexas.edu
}
\begin{document}

\maketitle

\begin{abstract}
	Chore division is a class of fair division problems in which some
undesirable ``resource" must be shared among a set of participants, with
each participant wanting to get as little as possible. Typically the
set of participants is fixed and known at the outset. This paper
introduces a novel variant, called sequential online chore division
(SOCD), in which participants arrive and depart online, while the chore
is being performed: both the total number of participants and their
arrival/departure times are initially unknown. In SOCD, exactly one
agent must be performing the chore at any give time (e.g. keeping
lookout), and switching the performer incurs a cost. In this paper, we propose and analyze three mechanisms for SOCD: one centralized
mechanism using side payments, and two distributed ones that seek to
balance the participants' loads. Analysis and results are presented in a domain motivated by autonomous vehicle convoy formation, where the chore is leading the convoy so that all followers can enjoy reduced wind resistance.			
\end{abstract}

\section{Introduction}
\label{sec:intro}
	Autonomous vehicles are said to form a convoy when vehicles headed in the same direction follow each other in close proximity. This behavior has been proven to save energy, due to the reduction in aerodynamic drag, and is used by migrating bird flocks and in cyclist pelotons.\footnote{Another commonly used term for convoy formation is \emph{platooning}. We use these two terms interchangeably.}
Autonomous vehicle technology offers a safe and accurate way of following with short inter-vehicle distances, even at high speeds, thanks to Vehicle-to-Vehicle (V2V) communication, which can alert all the followers immediately whenever any slowing is necessary. Empirical evaluations estimate that a follower can save over $10\%$ of its fuel consumption \cite{lammert2018influences}.
However, since the leader sees little or no such gains, choosing the leader of such a convoy raises issues of fairness. Solving these issues is challenging since vehicles can dynamically join and leave the convoy.

This convoy formation problem is representative of an interesting class of previously unexplored fair division problems.
Fair division is concerned with dividing a resource between several players, such that each one receives a fair share. One of the most notable fair division problems is cake cutting. Chore division is the dual problem, in which an undesirable task must be fairly divided among agents.
Motivated by the details of the convoy formation problem, we define a novel and unique variation of chore division called \textbf{S}equential \textbf{O}nline \textbf{C}hore \textbf{D}ivision (SOCD),
where agents arrive online, their number is not known a priori, and only one agent can handle the chore at any given time. 
We investigate how to design SOCD allocation mechanisms that guarantee fairness, and maximize efficiency.

The notion of fairness has various interpretations such as proportionality, envy-freeness, and equability. Guaranteeing fairness in dynamic environments, where either resources or participants arrive online, is not always possible for a single game \cite{walsh2011online}, for any reasonable definition of fairness, while in repeated games, fairness can be guaranteed in expectation.



To the best of our knowledge, none of the fair division literature in either game theory or multiagent systems has considered the SOCD problem as we define it. Furthermore, no previous work has developed a mechanism for profit sharing or load balancing among the vehicles in a convoy. 

This paper's contribution is thus twofold. First, in the area of fair division, it defines the general SOCD model, and second, in the area of convoy formation and platooning it introduces mechanisms that enable spontaneous formation of ad hoc convoys while maintaining fairness and efficiency.

We find that optimal fairness and efficiency can be guaranteed in a centralized setting. In a distributed setting, they can be guaranteed in expectation after participating in multiple games. However, for a single game in a distributed setting, only a relatively weak form of fairness, ex-ante proportionality, can be guaranteed with minimal efficiency loss. 

Following a review of related work in Section \ref{sec:related_work}, a formal definition of the problem is given in Section \ref{sec:model}. Issues of fairness in online and distributed mechanisms are discussed in Section \ref{sec:fairness}. Section \ref{sec:convoy_model} models the convoy formation problem as an SOCD problem. Our proposed solution mechanisms are defined and analyzed in Section \ref{sec:convoy_formation}. Conclusions and future work are provided in Section \ref{sec:conc}.

\section{Related Work}
\label{sec:related_work}

Fair division is a long-standing and still very active field of research spanning multiple disciplines such as economics, sociology, game-theory and mathematics, and having numerous real-world applications \cite{brams1996fair,friedman2003pricing}. These applications consider the allocation of both goods and chores. Compared to goods, the literature on fair allocation of chores
is relatively under-developed 
\cite{aziz2016computational}.

	In dynamic environments, where either agents or goods arrive online, the problem becomes more complex and even the definition of fairness becomes challenging to specify. Several papers define modified notions of fairness in online settings	
\cite{aleksandrov2015online,kash2014no,friedman2015dynamic,benade2018make}. 
Similar to these papers 
, we also define dynamic fairness criteria for SOCD in Section \ref{sec:fairness}. 

In online cake cutting problems, where agents arrive online, and have heterogeneous valuation functions, 
it has been proven that no online cake cutting procedure is either proportional, envy-free, or equitable \cite{walsh2011online}.
We adhere to this paper's call to continue investigating online chore division. We also extend its analysis by providing an impossibility result for both envy-free, and for equitable allocations in the single game distributed SOCD problem.

The SOCD model is relevant to applications such as assigning a guard to keep a lookout at a campground where travelers arrive online, or assigning a goal keeper in a drop-in soccer match. In this paper, we focus on convoy formation due to its social impact. Autonomous vehicle technology such as 
Cooperative Adaptive Cruise Control (CACC) \cite{lammert2018influences}  utilizes a combination of sensory data and V2V communication to enable vehicles to cooperate and follow each other closely, accurately, and safely, by synchronizing braking and accelerating.

Safe grouping of vehicles into convoys offers numerous advantages including: increased energy efficiency, improved road capacity, increased traffic safety, and decreased harmful emissions.  
As a result, major projects are being undertaken around the world in academia, private fleet companies, auto manufacturers, governments, and by individuals, to develop the applicability and regulation of convoys \cite{shladover2007path,robinson2010operating,tsugawa2013overview,de2014network,englund2016grand,peloton}.

While the percent of fuel saving varies with vehicle weight, size, speed, and inter-vehicle distance, one finding remains consistent across all studies; the leader's savings is significantly lower than that of the followers \cite{al2010experimental,lammert2014effect,lammert2018influences}. 

Many of the research projects and experiments in this area are geared towards single fleet convoys, owned and operated by the same organization \cite{bhoopalam2017planning,bergenhem2012overview,kavathekar2011vehicle,janssen2015truck}, and as a consequence do not put an emphasis on developing ways to fairly divide the otherwise unequal savings between the leader and the followers. As opposed to single fleet convoys where participants are not self-interested and are all motivated to maximize the social welfare, in ad-hoc convoys, individual participants are interested in maximizing their own energy savings. Consequently, to enable the spontaneous formation of ad-hoc convoys, it is imperative to design a mechanism that ensures a fair division of both benefits and duties among convoy participants. 




\section{Sequential Online Chore Division}
\label{sec:model}
	In this section we specify the definitions, assumptions, and constraints of the SOCD problem, where a continuous chore must be divided among an a-priori unknown number of agents. The input to the problem is an online stream of agents $A$, one of which must be tasked with performing the chore at any given time. Agent $i$, denoted as $a_i$, arrives at time $t\_arrive_i$ and leaves at time  $t\_leave_i$. During this period of time, [$t\_arrive_i$,$t\_leave_i$], $a_i$ is considered to be \textit{available}, and able to perform the chore. When performing the chore, an agent is considered to be \textit{active}.
Every available agent, except for the active one, gains a positive utility $u_i$ per unit time. The active agent gains nothing.

In this initial treatment of SOCD, we make the following assumptions, each of which may be relaxed in future work:
\begin{itemize}[leftmargin=*]
	\item Agents are homogeneous and have the same utility per unit of time $ \forall i \quad u_i=u $, the same cost per unit of time when active, $ \forall i \quad ca_i=ca $ (in this analysis we set $ca=0$), and the same valuation for being active for any part of the chore, $ \forall i,j,s \quad V_i(s)=V_j(s)$. We provide a short discussion on the challenges associated with considering heterogeneous agents in Appendix A.
	\item Each agent knows its own arrival and departure times, and communicates this information truthfully to the agents that are present when it arrives.
	\item No two agents have the exact same arrival time.
\end{itemize}	
The time frame for an SOCD game, $T$, starts when there is at least one available agent, and ends when there are none left.

Within one availability period agents may have more than one active period. We denote the $m$-th time at which $a_i$ is assigned to become  \textit{active} by $t\_start^i_{m}$ and the corresponding $m$-th time in which it is assigned to stop by $t\_stop^i_{m}$.
$M_i$ is the set of $a_i$'s active periods. 

Agent $a_i$'s assigned share of the task, denoted as $s_i$, is the sum of all the periods that $a_i$ is assigned to be active, $s_i= \sum_{m \in M_i} t\_stop^i_{m}- t\_start^i_{m}$

We define a switch between an active agent $a_i$ and an available agent $a_j$, to happen if $t\_stop^i_{m}=t\_start^j_{n} \quad m \in M_i, n \in M_j$, for any of their stop and start times respectively. 

A switch results in a cost to the system $c$. We intentionally leave this definition as general as possible, since some applications may assign the cost to the outgoing agent while others to the incoming agent. In some applications this cost is fixed while in others if can be a function that depends on the state of the system, as is the case in convoy formation switching cost which is detailed in Section \ref{sec:convoy_model}.

In SOCD 
a \textit{feasible solution} is an online assignment of one agent to be active, out of the available agents in $A$, at any given time $t \in T$. 

Efficiency $E$ is defined as the total utility gained by all participating agents. 

$E= \sum_{a_i \in A} u \cdot[{t\_leave_i-t\_arrive_i} - s_i] - c \cdot nos$

with $nos$ representing the number of switches during $T$.

\begin{center}
	\textbf{Problem Definition for SOCD:}\\
	\textit{Given an online input stream of agents, find a mechanism that produces a feasible solution while maximizing both efficiency and fairness.}
\end{center}
With the assumption of homogeneity, maximizing efficiency is straightforward; simply reduce the number of switches as much as possible. Maximizing fairness on the other hand is more complex, as explained in Section \ref{sec:fairness}. 





Figure \ref{fig:example} illustrates the concepts that define an SOCD problem. It shows the availability periods of three agents as horizontal lines stretching from their arrival times to their respective leaving times. The period of this single game is between $a_1$'s arrival time and $a_3$'s leave time, and is marked with a darker background. Throughout this period, one agent is active, starting with $a_1$, followed by $a_2$, and then $a_3$, as denoted by the dashed red lines. Note that this allocation is a feasible one, but is not necessarily fair. We will discuss what is considered fair in SOCD in Section \ref{sec:fairness}.
\vspace{-10pt}
\begin{figure}[ht]
	\centering
	\includegraphics[scale=0.22]{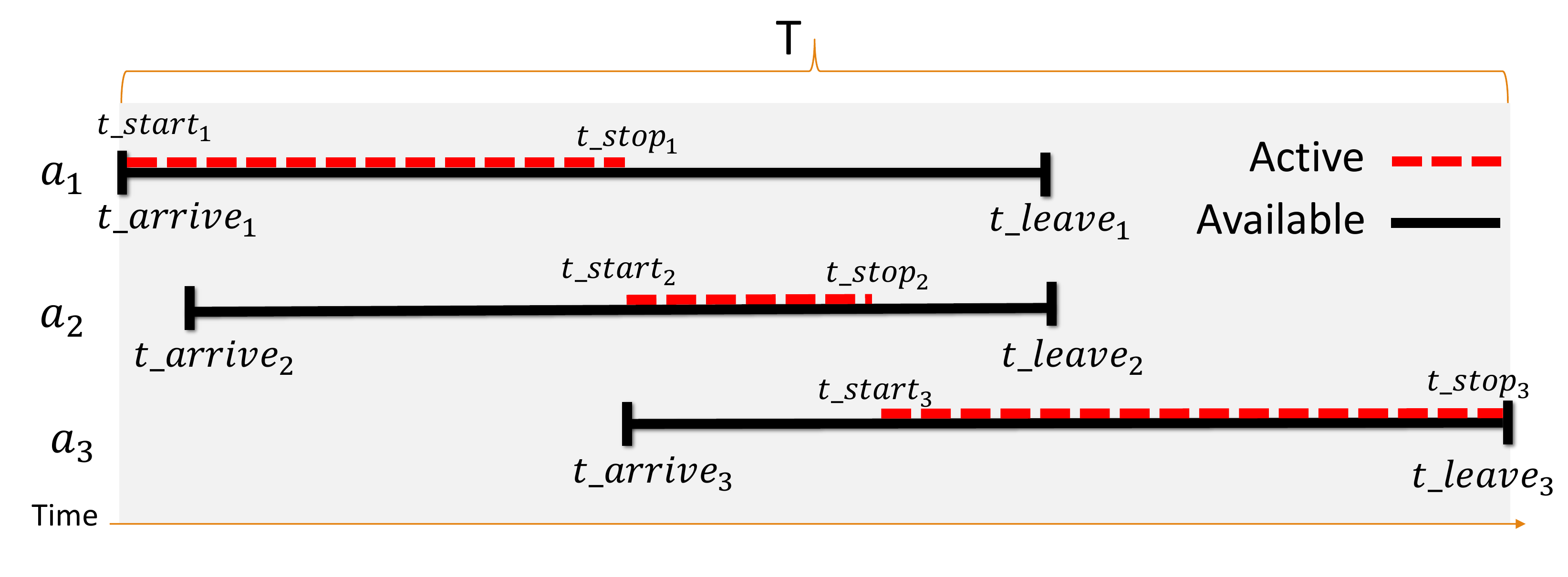}
	\vspace{-15pt}
	\caption{An example of the SOCD model. Three agents' availability periods are displayed on the time axis with agents 1, 2, and 3 acting as the active agent sequentially.}
	\label{fig:example}
\end{figure}
\vspace{-10pt}

\section{Fairness Definitions and Properties}
\label{sec:fairness}

In the SOCD model, when assessing the fairness of a specific agent's allocation, we consider allocations of agents with overlapping availability periods, and distinguish between earlier arrivals and later arrivals, similar to the \textbf{dynamic} definitions of fairness in \cite{kash2014no}.


	For a given agent, $a_i$, we define two notions of proportionality. The first is ex-ante proportionality which takes into account only the agents which are present at $t\_arrive_i$. The second is ex-post proportionality which considers all the agents that were available during $a_i$'s availability period. 


In order to define $a_i$'s proportional share in the dynamic SOCD model, it is necessary to separately analyze every segment of its availability period.
In each segment there is a different subset of available agents from $A$.
We calculate $a_i$'s proportional share for each segment, and finally sum up all of these shares.

	\textbf{Ex-Ante Proportional Share}

We define \emph{EAS}$^i$, as the set of segments, $seg_j^i \in$ \emph{EAS}$^i$ ($j$ is simply an index of the segments) within $a_i$'s availability period, which are known at $t\_arrive_i$. This set does not consider future arrivals. The first segment, $seg_1^i$, starts with $a_i$'s arrival, and ends at the $t\_leave$ of the first agent among the ones that are present at $t\_arrive_i$. Each consecutive segment ends at the departure of another agent, until $t\_leave_i$.  

The ex-ante proportional share for $a_i$ is the sum of known segments' sizes, each divided by the respective number of agents present at that segment, $n\_seg_j^i$, without considering future arrivals. we do not consider future arrivals in the calculation of the ex-ante proportional share since we only have estimates of future arrivals, and thus, in the worst case scenario, the estimate will not reflect the real outcome. In such a case, the sum of the proportional shares will not add up to cover the entire task.
$$ ex\_ante\_prop_i = \sum_{j=1}^{j=|EAS^i|} \frac{ |seg_j^i|}{ n\_seg_j^i} + \frac{c}{u} $$ 

We add one switching cost to the proportional share of every agent since in the worst case, every agent except for the last one would have to switch at least once in order to divide the chore into $n$ parts. The proportional share is expressed in terms of time. In order to keep the unit of measurement consistent, the switching cost, $c$, is divided by the utility per unit time, $u$.

	\textbf{Ex-Post Proportional Share}

The ex-post proportional share of $a_i$ considers the actual segments, including future arrivals, that occurred during $a_i$'s availability period. 
We define \emph{EPS}$^i$ as the set of actual segments within $a_i$'s availability period, $seg_j^i \in$ \emph{EPS}$^i$. The first segment, $seg_1^i$, starts with $a_i$'s arrival, and a new segment starts whenever there is a change in the number of available agents, until $t\_leave_i$.  

$$ ex\_post\_prop_i = \sum_{j=1}^{j=|EPS^i|} \frac{ |seg_j^i|}{ n\_seg_j^i} + \frac{c}{u} $$ 

Note that ex-post proportionality can only be calculated in retrospect, while the ex-ante proportionality can be calculated immediately when the agent arrives. 
If no new agents arrive during $a_i$'s availability period, its ex-ante and ex-post shares are the same.
Also note that the ex-post proportional share is also envy-free and equitable since for each segment all the agents get equal shares.

Figure \ref{fig:prop} provides an example which highlights the proportional shares of agents $a_1$,  $a_2$ and $a_3$. The three horizontal lines represent the availability periods of the agents. 
Agent $a_2$'s ex-ante proportional share is a half of its availability period which is shared with the existing agent $a_1$ and the total rest of its availability period. This share is highlighted in purple on top of $a_2$'s timeline. 
Agent $a_2$'s ex-post proportional share is a half of its availability period which is shared with just $a_1$, a third of its shared availability with both $a_1$ and $a_3$, a half of the time with just $a_3$, and the rest of the time alone. These shares are highlighted by the orange below $a_2$'s timeline.
 
There are no new entrants after $a_3$ and so its ex-ante and ex-post shares are the same.
Note that the switching costs are not depicted in this diagram. However, the cost (in terms of time) of one switch is added to the ex-ante proportional share of each agent.

\begin{figure}[h]
	\centering
	\includegraphics[scale=0.33]{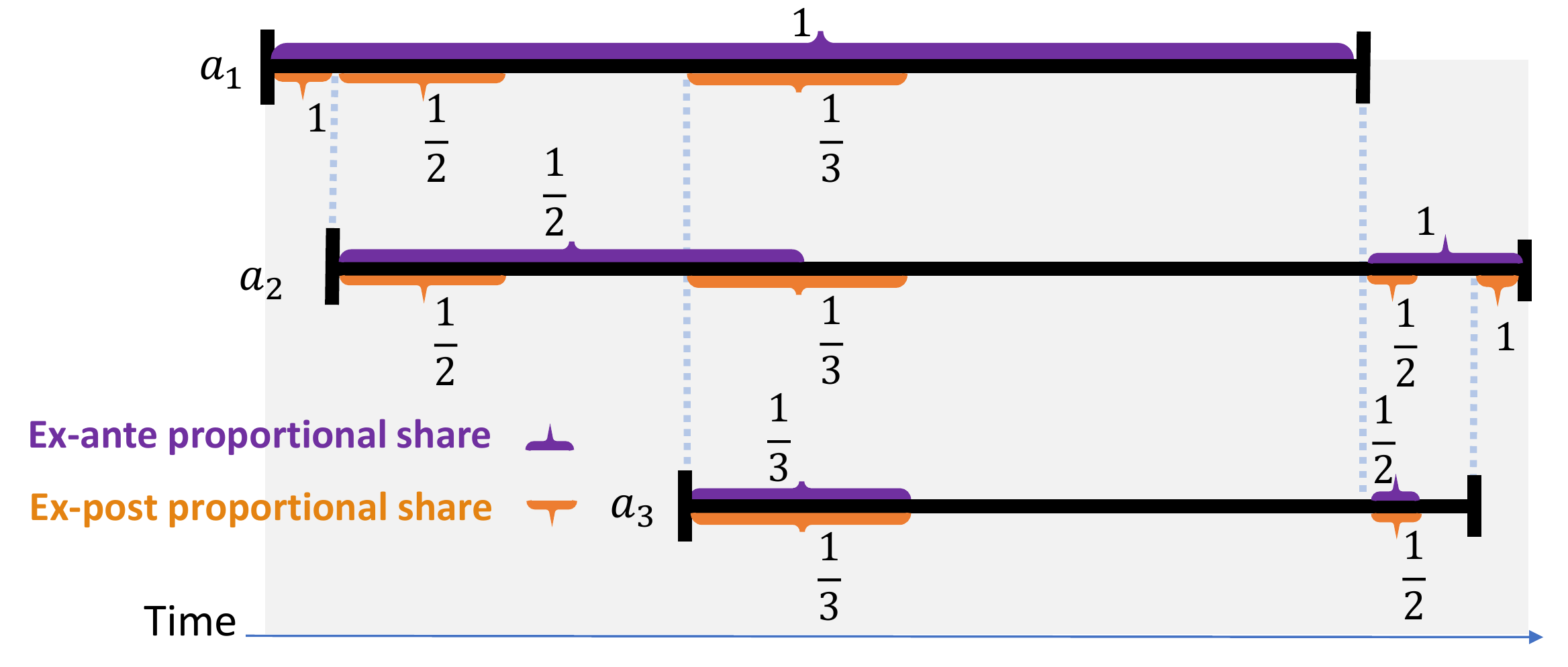}
	\caption{An example of the calculation of the proportional shares.}
	\label{fig:prop}
	\vspace{-5pt}
\end{figure}

Theorem \ref{thm:imposibility} outlines the limitations of guaranteeing equitability (and envy-freeness) in SOCD.


\begin{thm}\label{thm:imposibility}
	In SOCD, no mechanism can guarantee ex-post proportionality for a single game, in a distributed setting.
\end{thm}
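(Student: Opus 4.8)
The plan is an online adversary argument in the spirit of \cite{walsh2011online}. Assume for contradiction that some distributed mechanism $M$ produces, on every input stream, an allocation with $s_i \le ex\_post\_prop_i$ for all agents. I would construct an arrival stream \emph{adaptively}, as a function of $M$'s observed behaviour, on which this inequality fails for at least one agent.

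The first step is to isolate what $M$ cannot undo. Because exactly one agent is active at every time and $M$ is online, the identity of the active agent on any prefix $[0,t]$ is determined by the arrivals seen up to $t$; no later arrival can reallocate work already performed. In particular, while $a_1$ is the only available agent it is forced to be active, so $M$ has no freedom at all on $a_1$'s solo interval. This irrevocability, combined with the fact that a burst of many long-lived agents arriving at a time $s$ makes every segment after $s$ carry negligible per-capita value --- and hence collapses the contribution of all those segments to $ex\_post\_prop_k$ for each incumbent $a_k$ --- is the pressure the adversary applies.

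The construction proceeds in rounds. Start with $a_1$ (far departure) and, after a controlled solo interval, introduce $a_2$ (far departure). From the two-agent instance, ex-post proportionality forces $M$ --- up to the additive slack $c/u$ --- to keep the cumulative work of every currently available agent balanced at essentially every instant: if at some time $s$ some incumbent $a_k$ has already performed more than its balanced share plus $c/u$, the adversary releases a burst of many long-lived agents at time $s$, collapsing $ex\_post\_prop_k$ down to roughly what $a_k$ did before $s$, and we win immediately. Otherwise $M$ is maintaining tight balance, and the adversary instead releases a single new regular agent and recurses, each round forcing $M$ to keep a larger population balanced on an ever-smaller remaining interval. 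The contradiction is obtained by showing this recursion terminates in a win: after finitely many rounds the discrete-switching error $M$ cannot avoid within a round, compounded with the early irrevocable commitments forced on $M$ while few agents were present, must exceed the fixed budget $c/u$, so in one of the two branches some agent ends with $s_i > ex\_post\_prop_i$.

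The main obstacle I expect is precisely this termination argument: since $M$ is adaptive and free to redistribute all future load, one has to rule out every rebalancing strategy, so the adversary's parameters --- number of rounds, burst sizes, release times --- must all be defined in terms of $M$'s realized play, and one must then prove that the process cannot continue indefinitely with $M$ always recovering. Making the bookkeeping of the $c/u$ slack precise across rounds, and pinning down in which branch (``burst now'' versus ``one more regular arrival'') the overflow is guaranteed, is the delicate part. It is also worth noting that the distributed restriction is essential here: the centralized mechanism of Section~\ref{sec:convoy_formation} escapes this impossibility by correcting unfair allocations with side payments, an option unavailable to a load-balancing mechanism.
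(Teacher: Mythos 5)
Your proposal has the right ingredients (online arrivals, irrevocability of work already performed, and the fact that a late arrival shrinks the incumbents' ex-post shares), but it is a plan rather than a proof, and the step it leaves open is exactly the heart of the matter. The ``balance or burst'' recursion ends with the assertion that the compounded error must eventually exceed the $c/u$ slack; you say yourself that this termination argument is the delicate part, and you do not supply it. It is not a routine detail: switching costs in SOCD penalize efficiency, not feasibility, so an adaptive mechanism may switch arbitrarily often and try to keep every available agent's cumulative active time within $c/u$ of its accrued per-segment credit, and your burst branch only wins if some incumbent has already exceeded that credit by more than $c/u$ --- which is precisely what remains to be shown. As written, neither branch of the case analysis is proven to terminate in a violation, so no contradiction is actually derived.

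The paper's proof is far simpler and sidesteps this bookkeeping by working with the equal-per-segment characterization of the ex-post proportional allocation, which under the homogeneity assumption the paper identifies with equitability (and envy-freeness). Any feasible schedule has some agent $a_{last}$ assigned to perform the final portion of the chore, sized so that all shares come out equal; the adversary injects a single newcomer just as $a_{last}$ is about to perform that portion. Since the other agents' completed shares are irrevocable, either the newcomer shares that final portion (so $a_{last}$ ends with strictly less than everyone else) or it performs nothing (so the newcomer's share is unequal); either way the required allocation is unattainable, and one well-timed arrival suffices --- no multi-round construction, no bursts, no slack accounting. If you insist on your stronger quantitative reading of the statement, namely $s_i \le ex\_post\_prop_i$ including the $c/u$ buffer, then something like your adversarial recursion would indeed be needed, but then the missing termination/overflow lemma is the proof, and it has to be established rather than conjectured.
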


\begin{proof}
	Since the chore can only be performed sequentially by one agent at a time, any schedule has to assign one agent, $a_{last}$, to be the last to perform the task, and complete its equitable share $s_{last}$. By performing the last part of the task, $a_{last}$ would satisfy the other agents' demands for equitability, i.e., doing the same as they did, $\forall i,j$ $s_i = s_j$. Specifically having $\forall i$ $s_i = s_{last}$. However, because we are dealing with an online arrival stream of agents, it is possible that a new agent, $a_{new}$ would arrive just when agent $a_{last}$ is about to perform its last portion of the task. The new agent would share that portion with $a_{last}$ and would reduce $a_{last}$'s share. Thus, having  $\forall i$ $s_{last} < s_i$. Note that if $a_{new}$ does not contribute anything then that would create inequitability with regards to $a_{new}$.
	Hence, equitability cannot be guaranteed ex-post (nor envy-freeness). 
\end{proof}

\section{Modeling Convoy Formation as SOCD}
\label{sec:convoy_model}
In this section we frame the convoy formation problem as an instance of the SOCD problem, and outline the application-specific assumptions that are relevant to convoy formation. 

Initially, in convoy formation, the \emph{chore} to be divided is \emph{leading} the convoy and the $active$ agent in the SOCD model is the $leading$ agent in the convoy.\footnote{ Despite  evidence showing that the leading agent might get some reward, we assume that it gains nothing as in the general SOCD model since its gain is negligible compared to that of the followers and thus does not meaningfully affect our analysis.
}


\paragraph{\textbf{Switching and Rotating}}

	Unlike the general definition of switching an active agent in the SOCD model, in the convoy formation setting, we distinguish between two types of switching: rotating and joining/leaving.
A \emph{rotation} is when a leading agent finishes its share and moves to the back of the convoy.
The rotation process requires the leading agent to switch lanes, slow down to let the convoy pass it, and rejoin from the back. 
The time it takes to rotate is proportional to $n_r$ - the number of vehicles in the convoy at the time of rotation. 
During the rotation process, the rotating agent is effectively out of the convoy and does not enjoy fuel savings. Therefore, this form of switching incurs a cost to the rotating agent. 
Other forms of switching, happen when a new agent joins the convoy at the front, or when the leader leaves entirely. We assume that these forms of switching do not incur any cost.
Formally, in convoy formation, the switching cost, $c_{cf}$, incurred by outgoing leading agent $a_i$, when switching with $a_j$, is defined as:
$$
c_{cf}(n_r)  =
\begin{cases}
\text{0} &\quad\text{if $a_j$ joins from the front}\\
\text{0} &\quad\text{if $a_i$ leaves}\\

\text{$c \cdot n_r$} &\quad\text{otherwise} \\ 
\end{cases}
$$

%
%

\paragraph{\textbf{The constant speed assumption}}   

For simplicity, we assume that the convoy is moving at a constant speed and so the time spent in the convoy is proportional to the length of the road traveled. 
This assumption implies that we can refer to $u$ which is the utility per unit time, also as the utility per unit length.
We expect our results to easily generalize as long as speed limits are known for all road segments.
%

\section{Convoy Formation Mechanisms}
\label{sec:convoy_formation}
In this section we outline three possible convoy formation mechanisms, each geared toward a different set of environmental assumptions. The first mechanism is applicable when there is a central payment transfer system. The second mechanism assumes a distributed setting where there are no payment transfer abilities, and guarantees fairness in expectation through repeated games. The third mechanism also assumes a distributed setting, and aims to guarantee fairness for every single game by rotating the leader. 
The last case is the interesting one from a technical point of view.

\subsection{Payment Transfers}

Assuming the existence of a central payment transfer system which can charge or refund the agents, the Payment Transfer mechanism (Mechanism \ref{alg:pt}) assigns only one active agent while the followers transfer a share of their savings to the leader in order to keep fairness. They do so at every time there is a change in the number of agents in the convoy, i.e., at the end of every segment in $EPS^i$. 
We divide the chore $X$ into segments where in each segment the number of convoy members is constant (i.e., a segment is a part of the road between two adjacent arrivals/departures). We denote the number of convoy members in segment $seg$ as $n_{seg}$. For each segment, the amount that a following agent $a_i$ needs to pay the leader of that segment is $p_{seg}^i = \frac{|seg|\cdot u}{n_{seg}}$. 


This mechanism allows agents to join the convoy from the rear, middle, or front, and does not require any rotations to be made at all, yielding an optimal solution in terms of efficiency.
It is also optimal in terms of fairness since the agents equally share the savings for every segment. Each agent pays a cost which is equivalent to the loss of savings it would have absorbed if it had lead for its equitable share, i.e., its ex-post proportional share. Furthermore, the leader gets payments which are equal to the saving it would have enjoyed if it had only led for its equitable share, as proved in Theorem \ref{prop:ProfitSharingOptimality}.
\begin{thm}\label{prop:ProfitSharingOptimality}
	The Payment Transfer mechanism is efficiently optimal, and equitable.
\end{thm}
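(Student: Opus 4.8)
The plan is to prove the two claims separately: efficiency optimality first, then equitability.

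\textbf{Efficiency optimality.} Recall from the model that $E = \sum_{a_i \in A} u\cdot[(t\_leave_i - t\_arrive_i) - s_i] - c\cdot nos$. First I would observe that $\sum_i (t\_leave_i - t\_arrive_i)$ is fixed by the input stream and is not affected by the mechanism, and that $\sum_i s_i = |T|$ (the total length of the game) regardless of the schedule, since exactly one agent is active at every instant of $T$. Hence the only term any mechanism can influence is the switching term $c\cdot nos$. In the Payment Transfer mechanism, a single agent is designated active and no rotations are ever performed; the only switches are cost-free ones (a new agent joining from the front, or the active agent leaving), which by the definition of $c_{cf}$ contribute $0$ to the cost. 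So the mechanism achieves $nos$-cost equal to $0$, which is a trivial lower bound on the switching cost, and therefore $E$ is maximized. This step is routine once the accounting identities are stated.

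\textbf{Equitability.} Here I would show that under the payment scheme, every agent's \emph{net} utility over the game equals what it would have received had it personally performed exactly its ex-post proportional share of each segment it is present for. Fix a segment $seg$ with $n_{seg}$ agents present. The realized gross savings generated in that segment is $|seg|\cdot u$ (one leader, $n_{seg}-1$ followers each saving $|seg|\cdot u$... — careful: followers gain $u$ per unit time, the leader gains nothing, so gross is $(n_{seg}-1)\cdot|seg|\cdot u$; but the leader ``should'' also have gotten $|seg|\cdot u / n_{seg}$ worth of savings in an idealized equal split). The transfer rule has each of the $n_{seg}-1$ followers pay $p_{seg}^i = |seg|\cdot u / n_{seg}$ to the leader. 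After transfers, each follower nets $|seg|\cdot u - |seg|\cdot u/n_{seg} = |seg|\cdot u\cdot\frac{n_{seg}-1}{n_{seg}}$, and the leader nets $(n_{seg}-1)\cdot|seg|\cdot u/n_{seg}$, which is the same quantity. So within each segment all present agents realize identical net utility; summing over segments, and noting that two agents' \emph{totals} are sums over the segments each attends, I would argue that each agent's total net utility equals $u$ times its ex-post proportional share (minus/adjusting for switching cost, which is zero here), matching the definition of $ex\_post\_prop_i$. Since every agent's realized value coincides with its ex-post proportional (hence, by the remark after the EPS definition, equitable and envy-free) share, the allocation is equitable.

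\textbf{Main obstacle.} The technical subtlety — and the step I'd be most careful about — is reconciling the bookkeeping of ``savings'' with the chore-division valuations: an agent's ex-post proportional share is defined in units of \emph{time spent leading}, whereas the payment mechanism never makes most agents lead at all, so I must translate the monetary transfers back into equivalent leading-time and verify they line up segment by segment, including the edge cases where an agent is alone in a segment ($n_{seg}=1$, no transfer, leads the whole segment) and where an agent is the designated single leader for the entire game. I'd also need to confirm that ``equitable'' in this theorem is meant in the realized-utility sense rather than literally equal $s_i$, since under this mechanism the $s_i$ are wildly unequal — the claim is that net utilities (equivalently, effective shares after compensation) are equalized, which is exactly what the segment-wise computation above delivers.
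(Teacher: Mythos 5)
Your proposal is correct and follows essentially the same route as the paper's own proof: the equitability part is exactly the paper's segment-wise computation showing each follower's net saving $\frac{(n_{seg}-1)|seg|u}{n_{seg}}$ equals the leader's total received payment, and the efficiency part just spells out in more detail the paper's observation that no costly rotations are ever performed. Your added care about translating payments into equivalent leading time and the $n_{seg}=1$ edge case is fine but does not change the argument.
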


The proof can be found in Appendix B.

\begin{algorithm}
\small
	\caption{ The Payment Transfer Mechanism }\label{alg:pt}
	\begin{itemize}[leftmargin=5pt]

		\item Agents can join from the rear, middle, or front of the convoy.
		\item A switch happens when the leader leaves or when a new agent joins from the front.
		\item For every segment, each following agent transfers $p_{seg}^i$ to the leader of that segment.
	\end{itemize}
\end{algorithm}

Although we are mainly interested in the distributed setting, and although the design and analysis of this centralized mechanism is fairly straightforward, we include it since it guarantees ex-post proportionality, a result which based on Theorem \ref{thm:imposibility} is unattainable for distributed mechanisms. Distributed mechanisms strive to produce allocations which are as close as possible to the ones produced by the payment transfer mechanism. 

\subsection{Load-Balancing}
The payment transfer mechanism's basic assumption is the availability of some payment system to each one of the convoy's participants. In real-world scenarios however, this is not always the case. 


As a result, in this section we consider a load balancing mechanism aimed to distribute the load equally between all convoy participants. 



\subsubsection{Repeated Game Load Balancing}

In the distributed setting, we start by analyzing a mechanism where agents do not make any costly rotations.

The Repeated Game Load Balancing mechanism that we propose (Mechanism \ref{alg:rglb}) requires that each agent first contributes its share and only then will enjoy the advantages of being a follower. Therefore, any new agent can only join the convoy from the front, and become the leader until someone else joins, or until it leaves.

\begin{algorithm}[h]
\small
	\caption{Repeated Game Load Balancing Mechanism}\label{alg:rglb}
	\begin{itemize}[leftmargin=5pt]

		\item Agents can join only from the front.
		\item A switch happens when the leader leaves or a new agent joins.
	\end{itemize}
\end{algorithm}
This mechanism has no fairness guarantees for a single game, only over an infinite time horizon where each agent can have multiple, non-overlapping availability periods. We assume that each agent's different availability periods are independent and identically distributed (i.i.d). We also assume that for each availability period, the arrival rate of other agents is i.i.d.

\begin{thm}\label{prop:ergodity}
	Given the i.i.d assumption of the availability periods and arrival rates, and assuming that each agent participates in an infinite number of convoys, if all agents use the Repeated Game Load Balancing mechanism, every agent will lead for the expected ex-post proportional share.
	

\end{thm}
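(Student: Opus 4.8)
The plan is to reduce the claim to a statement about long-run averages and then invoke an ergodic/law-of-large-numbers argument. First I would fix an agent $a_i$ and consider the infinite sequence of convoys (availability periods) in which it participates. By the i.i.d. assumption on availability periods and on the arrival rates of other agents, the sequence of per-game outcomes — in particular the pair (the share $a_i$ is assigned to lead in game $k$, the ex-post proportional share of $a_i$ in game $k$) — forms an i.i.d. sequence of random variables. I would then argue that in \emph{every} single game under Mechanism~\ref{alg:rglb} the amount $a_i$ actually leads equals some well-defined quantity determined by the realized arrival pattern, and separately compute $a_i$'s ex-post proportional share for that same realization; applying the strong law of large numbers to both sequences gives that the time-average lead converges to the expectation of the per-game lead, and the time-average ex-post share converges to the expectation of the per-game ex-post share. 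The theorem then follows once I show these two expectations coincide.

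The core combinatorial step is the per-game identity: I must show that, \emph{summed over all agents}, the total leading time in a game equals the total length of the chore $X$ (which is immediate, since exactly one agent leads at every instant and no costly rotations waste time here), and then exploit symmetry. The key observation is that under the ``join only from the front'' rule, in any segment with $n_{seg}$ available agents, the agent currently leading is equally likely — under the i.i.d. randomness over which agents occupy which arrival slots — to be any one of the $n_{seg}$ agents present in that segment. Hence the expected contribution of segment $seg$ to any particular present agent's lead is $|seg|/n_{seg}$, which is exactly that agent's ex-post proportional share contribution for that segment. Summing over segments of $a_i$'s availability period gives $\mathbb{E}[\text{lead of }a_i\text{ in one game}] = \mathbb{E}\big[\sum_{j} |seg_j^i| / n\_seg_j^i\big] = \mathbb{E}[ex\_post\_prop_i]$ (the $c/u$ switching term vanishes because this mechanism performs no costly rotations, so it does not enter the actual lead; I would note this matches the zero-rotation-cost accounting).

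Concretely, the steps in order are: (1) formalize the probability space over infinite sequences of games, invoking the i.i.d. assumptions to get i.i.d. per-game random variables; (2) for a single game, describe exactly who leads in each segment under Mechanism~\ref{alg:rglb} and establish the exchangeability/symmetry of leader identity across the agents present in a segment; (3) derive the per-game expected-lead identity $\mathbb{E}[s_i^{(k)}] = \mathbb{E}[ex\_post\_prop_i^{(k)}]$ from that symmetry, segment by segment; (4) apply the strong law of large numbers to conclude that the realized average lead of $a_i$ over the first $N$ games converges almost surely to this common expectation as $N \to \infty$.

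The main obstacle I anticipate is step~(2): justifying that the leader of a segment is uniformly distributed among the present agents. This is not automatic from ``arrival times are i.i.d.'' alone — it depends on a genuine exchangeability property of how Mechanism~\ref{alg:rglb} selects leaders as a function of the arrival order, and on the assumption that the $n\_seg_j^i$ values and the identity of the leader are appropriately independent of \emph{which} agent we singled out as $a_i$. I would need to state carefully the exchangeability assumption that makes this work (essentially that all agents play symmetric roles in the arrival process), and then the symmetry argument that each present agent leads an equal expected fraction of each segment becomes a clean consequence. Everything after that is a routine application of the strong law of large numbers.
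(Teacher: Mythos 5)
Your proposal is correct and follows essentially the same route as the paper's own (appendix) proof, which likewise combines the i.i.d.\ assumption with the law of large numbers and the symmetry observation that each agent is equally likely to be the one leading; your write-up is simply a more careful elaboration, making explicit the per-segment exchangeability step and the fact that the $c/u$ rotation term never enters since no costly rotations occur. No change of approach or additional ideas beyond the paper's argument are involved.
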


The proof can be found in Appendix C. 

The main advantage of this mechanism is that no switching is performed and thus no switching cost is incurred.

\begin{cor}\label{prop:equitability}
	Given the i.i.d assumption of the availability periods and arrival rates, the Repeated Game Load Balancing mechanism guarantees equitability, i.e. ex-post proportionality, in expectation. 
\end{cor}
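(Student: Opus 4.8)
The plan is to derive Corollary~\ref{prop:equitability} as an almost immediate consequence of Theorem~\ref{prop:ergodity} together with the observations already recorded in Section~\ref{sec:fairness} about the relationship between ex-post proportionality, envy-freeness, and equitability under homogeneous agents. First I would recall the chain of implications: because all agents are homogeneous (so we may drop valuation functions), the ex-post proportional share of an agent is exactly the share obtained by giving every agent an equal slice in each segment of its availability period, and as noted earlier this allocation is simultaneously envy-free and equitable. Hence ``leading for one's ex-post proportional share'' is the same event as ``the allocation being equitable.''

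Next I would invoke Theorem~\ref{prop:ergodity} directly: under the i.i.d.\ assumptions on availability periods and arrival rates, and over an infinite horizon of convoys, every agent leads for exactly its expected ex-post proportional share. Taking expectations over the (identically distributed) availability periods, every agent's expected leading time equals the common expected ex-post proportional share, and symmetrically every agent's expected following time is the same. I would then spell out that this is precisely equitability in expectation: $\mathbb{E}[s_i] = \mathbb{E}[s_j]$ for all $i,j$, which by the homogeneity reduction is the same as $\mathbb{E}[V_i(s_i)] = \mathbb{E}[V_j(s_j)]$. Since each agent's realized leading time over the infinite horizon converges to this expected value by the law of large numbers, the per-agent average share is the expected ex-post proportional share, so ex-post proportionality holds in expectation as well.

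The last step is to close the loop on the ``i.e.'' in the statement by observing that equitability implies proportionality (but not conversely), as noted in the static definitions, so guaranteeing equitability in expectation automatically yields ex-post proportionality in expectation; the two phrasings in the corollary are therefore equivalent under homogeneity, and no separate argument is needed.

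I do not expect any real obstacle here: the corollary is essentially a restatement of Theorem~\ref{prop:ergodity} using the homogeneity-driven equivalence between equitability and equal shares. The only thing requiring care is being explicit that the equivalence ``ex-post proportional share $=$ equitable share'' holds per segment and hence aggregates over the whole availability period, and that expectation and the infinite-horizon average coincide by the law of large numbers under the i.i.d.\ hypotheses — both of which are already established in the preceding text, so the proof is a short deduction rather than a new argument.
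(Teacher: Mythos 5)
Your proposal is correct and matches the paper's intent exactly: the paper gives no separate proof for this corollary, treating it as an immediate consequence of Theorem~\ref{prop:ergodity} (whose proof is the law-of-large-numbers argument you invoke), combined with the observation from Section~\ref{sec:fairness} that under homogeneity the ex-post proportional allocation is itself equitable. Your explicit spelling-out of $\mathbb{E}[s_i]=\mathbb{E}[s_j]$ and the equitability--proportionality implication is just a more careful write-up of the same deduction.
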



While the Repeated Game Load Balancing mechanism is fair in expectation,
any individual participant may end up with a very unfair allocation until it has participated in many instances. To study this effect, and in particular how many convoys an agent needs to participate in, in order for its ratio of actual to ex-post proportional lead time to converge to 1, we created a custom simulation environment.
\footnote{The code is attached as supplementary material and will be made publicly available upon acceptance.}
The details of the experiments can be found in Appendix D, and the results indicate that only after participating in 700 convoys on average, there are no vehicles who lead more than $10\%$ of their ex-post proportional share. 

\subsubsection{Single Game Load Balancing}
\label{sec:single_game}


Creating a distributed single game mechanism that guarantees fairness, and maximizes efficiency is challenging, since for a distributed, single SOCD game, equitability (i.e., ex-post proportionality) cannot be guaranteed, as described in Theorem \ref{thm:imposibility}. As a result, we aim to guarantee ex-ante proportionality, while getting as close as possible to ex-post proportionality in practice. The mechanism we propose requires each agent to lead the convoy for no more than its ex-ante proportional share, and rotate at most once.

At any given time $t$, the convoy consists of agents that have finished leading their share, denoted as ${A_f}^t$, and agents who have yet to complete their assigned share, denoted as ${A_l}^t$. The agents in ${A_l}^t$ are sorted by their leave times with the first agent to leave located at the front of the convoy. 
According to the mechanism we propose, a new agent, $a_i$, entering the convoy at time $t\_arrive_i$ is inserted into the convoy sorted by its leave time, among the agents in ${A_l}^{t\_arrive_i}$ (for brevity we denote this set as ${A_l}^i$, and ${A_f}^{t\_arrive_i}$ as ${A_f}^i)$.

Leading agents lead until someone else joins in front of them, or until they finish their assigned leading share, after which they rotate to the back of the convoy. 
As a result, the agents in ${A_f}^t$ occupy the back of the convoy and do not have to lead again until they leave, thus guaranteeing that agents rotate at most once. 

When an agent $a_i$ enters, it is assigned the ex-ante proportional share with regards to all the agents in the convoy, (${A_l}^i$ and  ${A_f}^i$). 
Regarding the existing agents there are two possibilities to consider:
\begin{enumerate}[label=\roman*,leftmargin=5pt]

\item The existing agents' assigned leading shares will remain unchanged when a new agent enters.

\item  The remaining shares of the agents in ${A_l}^i$ will be dynamically adjusted according to the contribution of the new agent. Note that the shares of the agents in ${A_f}^i$ cannot be changed since performed tasks are irrevocable. 

The assigned share of agents in ${A_l}^i$ will be adjusted in the following way. For each segment $seg_j^i \in EAS^i$ in $a_i$'s availability period, $a_i$'s share will be $ \frac{ |seg_j^i|}{ n\_seg_j^i}$, and this added share will be divided among the agents from ${A_l}^i$ which are available in that segment, ${A_l}_j^i$. The agents in ${A_l}_j^i$ will each reduce their remaining shares $rm$ by $ {\frac{ |seg_j^i|}{ n\_seg_j^i}}/{|{A_l}_j^i|}$ with a minimum of 0.

\end{enumerate}
\vspace{-5pt}
\begin{algorithm}
\small
\caption{Single Game Load Balancing Mechanism}
\label{alg:sglb}
\begin{algorithmic}[1]
\State Agents in $A_l$ are sorted by order of leave time with the first one to leave in front.
\State New agents join at the sorted location in $A_l$ according to their leave time.
\State Each agent is initially allocated its ex-ante proportional share to lead.
\If{DynamicAdjustment}
	\For{(Every new entrant $a_i$ )}
        \For{(Every segment $seg_j^i \in EAS^i$ )}
            \For{(Every agent $a_k$ in  ${A_l}_j^i$ )}
             \State $rm_k = \max\{0, rm_k - {\frac{ |seg_j^i|}{ n\_seg_j^i}}/{|{A_l}_j^i|}\}$
      \EndFor
      \EndFor
\EndFor
\EndIf
\item A switch happens when the leader finishes its share or when a new agent joins in front of it.
\item When a leader finishes its share it rotates to the back of the convoy.
\end{algorithmic}
\end{algorithm}

While both variations of this mechanism (with or without dynamic adjustment) guarantee ex-ante proportionality, it is interesting to measure how close they come to the equitable ex-post proportionality in practice. For that reason we ran simulations of convoys on a straight 100km long highway with 100 uniformly distributed entry/exit stations. A single convoy moves through the stations and agents join it at their entry station and exit at their destination station. We consider two configurations. 
In configuration1 entry stations are randomly sampled from [1,100], and exit stations are sampled from (entry,100], uniformly at random.  In configuration2, $20\%$ of the agents sample the entry and exit stations as in configuration1, and the other $80\%$  use a bi-modal distribution of entry and exit stations where entry stations are sampled from the first 10 stations, and exit stations are sampled from the last 10 stations uniformly at random. The second configuration models a more realistic scenario where there is a highway between two city hubs. For each configuration we ran 100 convoys with 10 agents in each. For each participation of an agent in a convoy we measure the following quantities: \begin{itemize}[leftmargin=*]
    \item The ex-post proportional share (EPPS) that would result from employing the Payment Transfer mechanism.
    \item The actual leading share when employing the Repeated Game (RG) mechanism, and its ratio to EPPS.
    \item The actual leading share when employing the Single Game (SG) mechanism without dynamic adjustment and its ratio to EPPS.
    \item The actual leading share when employing the Single Game mechanism with Dynamic Adjustment (SG\_DA) and its ratio to EPPS.
\end{itemize}
For all of the ratios we computed the Gini coefficient which is a measure of statistical dispersion intended to represent the inequality within a group of people \cite{sen1997economic}. A Gini coefficient of zero expresses perfect equality, and a Gini coefficient of one expresses maximal inequality.

\begin{table}[]
\centering
\begin{tabular}{ccccl}
\cline{1-4}
\multicolumn{1}{|c|}{Configuration}                       & \multicolumn{1}{c|}{RG}   & \multicolumn{1}{c|}{SG}   & \multicolumn{1}{c|}{SG-DA}         &  \\ \cline{1-4} 
\multicolumn{1}{|c|}{1. Uniform station distribution}  & \multicolumn{1}{c|}{0.55} & \multicolumn{1}{c|}{0.44} & \multicolumn{1}{c|}{\textbf{0.06}}          &  \\ \cline{1-4}
\multicolumn{1}{|c|}{2. Bi-modal station distribution} & \multicolumn{1}{c|}{0.80} & \multicolumn{1}{c|}{0.69} & \multicolumn{1}{c|}{\textbf{0.02}} &  \\ \cline{1-4}
\multicolumn{1}{l}{}                         & \multicolumn{1}{l}{}      & \multicolumn{1}{l}{}      & \multicolumn{1}{l}{}               & 
\end{tabular}
\vspace{-20pt}
\caption{Gini coefficient comparison.}
\vspace{-15pt}
\end{table}

The results indicate that the SG mechanism provides allocations which are in practice more equal and closer to the EPPS than those provided by the RG mechanism (0.44 compared to 0.55). This result is expected since the RG mechanism is designed to converge to EPPS after multiple participations, and since it does not require rotations it is likely that some agents lead for much more than their EPPS in a single convoy. The inequality of RG becomes more acute in the bi-modal station distribution scenario with a Gini score of 0.80, since the leader has fewer chances of being replaced in the section between the two cities, which causes great inequality for a single game. However the SG mechanism's equality also deteriorates in the bi-modal station distribution, due to the fact that earlier entrants are allocated relatively large leading shares which are not adjusted as new agents arrive. 

The SG-DA mechanism has a much better Gini score than the SG mechanism in the uniform setting (0.06 compared to 0.44) and surprisingly it even improves its score in the bi-modal distribution where the other two mechanisms got worse. The reason that SG-DA improves in the bi-modal distribution to almost perfect equality with a Gini score of 0.02 is that since all the agents enter in the early stages of the convoy, the mechanism can adjust their shares before anyone has lead for a significant portion of their allocated share, when there are fewer agents in $A_f$ compared to the uniform distribution setting. 

\section{Conclusions and Future Work}
\label{sec:conc}
In this paper we define the SOCD problem, a novel sequential online variation of the chore division problem. We instantiate SOCD on a real-world problem of autonomous vehicle convoy formation.
We propose three fair-division mechanisms to balance the load of the leader and equally share the energy savings of the followers among all the convoy's participants.
The Payment Transfer mechanism assumes the existence of a central payment transfer system, and achieves optimal efficiency and fairness. 
The Repeated Game Load Balancing mechanism does not rely on a central payment system, yet offers optimal efficiency, and fairness in expectation, after participating in multiple repeated games. 
The Single Game Load Balancing mechanism is also distributed, and is able to achieve ex-ante proportionality with a minimal number of rotations. A variation of this mechanism which dynamically adjusts the allocated shares achieves in practice allocations that are very close to fair in terms of ex-post proportionality, and a Gini coefficient score of 0.02, in a realistic highway setting.

Possible threads for future work include designing mechanisms that can support heterogeneous valuation functions. There are multiple types of heterogeneity which can result in alternate definitions of fairness, and more complex requirements when optimizing for efficiency, as discussed in Appendix A.

\newpage

\bibliographystyle{named}
\bibliography{references}

\section{Acknowledgments}
This work has taken place in the Learning Agents Research
Group (LARG) at UT Austin.  LARG research is supported in part by NSF
(CPS-1739964, IIS-1724157, NRI-1925082), ONR (N00014-18-2243), FLI
(RFP2-000), ARO (W911NF-19-2-0333), DARPA, Lockheed Martin, GM, and
Bosch.  Peter Stone serves as the Executive Director of Sony AI
America and receives financial compensation for this work.  The terms
of this arrangement have been reviewed and approved by the University
of Texas at Austin in accordance with its policy on objectivity in
research.

\appendix

\section{Appendix for section 3 - Heterogeneous Agents}
\label{appendix3}

Considering heterogeneous agents is a natural direction for future work, but presents new challenges that would require more space than is available in this initial paper that introduces this new problem.
Just to give some insight, heterogeneity can manifest in different ways: 
\begin{enumerate}[label=\roman*]
	\item Each agent $a_i$ has an individual utility per unit of time $u_i$ as a follower. However, the utility per unit of time is constant. This case allows representing vehicles with different (static) energy saving profiles.  
	\item Each agent $a_i$ has an individual utility per unit of time $u_i(a_{front})$ as a follower. However, the utility per unit of time is a function of the vehicle in front $a_{front}$. This case allows representing vehicles with different energy saving profiles which are a function of the vehicle in front of them. For example, a small vehicle enjoys large savings when following a large truck, but if the order is reversed, the truck does not save much energy if it follows a small vehicle.
	\item Each agent $a_i$ has an individual cost per unit of time for leading, $c_i$. For example one soccer player might dislike being the goalie more than others, and would be willing to pay more for others to be active.
	
	\item Each agent $a_i$ has an individual valuation $v_i$ for leading different road segments. For example, a lookout in a campground might prefer to take the first watch to avoid waking up in the middle of the night. 
	\item We can also consider some combinations of these assumptions.
\end{enumerate}
When considering these options, the modeling of the problem changes and the notion of fairness becomes ill-defined. 

When considering case (i) for example, we need to reconsider the concept of proportionality and define what should be the proportional share in this case? Should it be $1/n$ of the follower's own utility, or $1/n$ of the leader’s utility? 

The fairness and optimality guarantees of the mechanisms depend on the definitions of fairness and will have to be reexamined in case it changes. 

Regarding Efficiency, the solution mechanisms will also need to be modified. For example, in the homogeneous case, the Payment Transfer mechanism allows any agent to be the leader, but in the heterogeneous case (i), the mechanism needs to be optimized by assigning the agent with the lowest utility to be the leader, in order to maximize the total followers’ savings.

In this paper we restrict our attention to considering homogeneous agents, and lay the foundations to the analysis of heterogeneous utility and cost functions in several possible avenues of future work.

\section{Appendix for Section 6.1 - Optimality of Payment Transfer Mechanism}
\label{appendix6.1}
Proof for Theorem 2. 

The Payment Transfer mechanism is efficiently optimal, and equitable.

\begin{proof}
	Since no rotations are made in the \textit{Payment Transfer} mechanism, it achieves optimal efficiency.
	In terms of fairness, for each segment, all the followers save the same amount of energy and pay the same amount of money to the leader. All we need to prove is that the sum of payments that the leader receives per segment is equal to each of the followers' saving for that segment, calculated as:
	$$
	|seg|\cdot u-p_{seg}^i =|seg|\cdot u-\frac{|seg|\cdot u}{n_{seg}}
	=\frac{(n_{seg}-1)\cdot|seg|\cdot u}{n_{seg}}
	$$	
	The leader's received payment for every segment is equal to:	
	$$
	(n_{seg}-1)\cdot p_{seg}^i =(n_{seg}-1)\cdot \frac{|seg|\cdot u}{n_{seg}}\\
	=\frac{(n_{seg}-1)\cdot|seg|\cdot u}{n_{seg}}
	$$
	which is identical to the saving of each follower and thus the mechanism guaranties equitability, both ex-ante and ex-post,
	and allows agents to be indifferent between leading and following. 
\end{proof}

\section{Appendix for Section 6.2.1 - Fairness of Repeated Game Load Balancing Mechanism}
\label{app:appendix6.2}
Proof for Theorem 3. \ref{prop:ergodity}:
\begin{thm}\label{prop:ergodity}
	Given the i.i.d assumption of the availability periods and arrival rates, and assuming that each agent participates in an infinite number of convoys, if all agents use the Repeated Game Load Balancing mechanism, every agent will lead for the expected ex-post proportional share.
	
\end{thm}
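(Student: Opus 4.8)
The plan is to reduce the statement to a per-game conservation identity combined with the strong law of large numbers. First I would rewrite the ex-post proportional share in integral form: since the segments in $EPS^i$ partition $a_i$'s availability interval $[t\_arrive_i, t\_leave_i]$ and the count $n\_seg_j^i$ equals the number $n(t)$ of simultaneously available agents throughout segment $j$, the ex-post proportional \emph{lead} share — i.e. $ex\_post\_prop_i$ minus the fixed $c/u$ buffer, which is moot here because every switch this mechanism performs, a leader departure or a join from the front, is costless in convoy formation — equals $\int_{t\_arrive_i}^{t\_leave_i} \frac{dt}{n(t)}$. Summing over all agents available at any instant $t$ gives $\sum_{i:\,a_i\text{ available}} \frac{1}{n(t)} = 1$, so integrating over the game period yields $\sum_i ex\_post\_prop_i = |T|$. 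On the other hand the mechanism is work-conserving: during $T$ there is always exactly one leader (a new arrival takes the lead, and when the leader leaves the next agent in the queue takes over; a leaderless instant would mean no available agents, i.e. the game has ended), so $\sum_i s_i = |T|$ as well. Hence in every single game, $\sum_i s_i = \sum_i ex\_post\_prop_i$.

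Next I would bring in the probabilistic assumptions via an ergodic-averaging argument. Fix one agent $a$. By the i.i.d. assumption on its availability periods and on the arrival streams it faces, the pairs (realized leading time $s_a$, realized ex-post proportional lead share) it obtains across its successive convoys are i.i.d., so by the strong law of large numbers its empirical average leading time converges almost surely to $\mathbf{E}[s_a]$ and its empirical average ex-post proportional share to $\mathbf{E}[ex\_post\_prop_a]$. Because the mechanism is anonymous — it refers to agents only through their arrival and departure times, never their identities — and the arrival process is exchangeable across agents, these two expectations do not depend on which agent $a$ was picked, and likewise the long-run frequency with which each agent participates is the same for all agents.

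Finally I would combine the two ingredients: sum the per-game identity $\sum_i s_i = \sum_i ex\_post\_prop_i$ over all agents and all games occurring in a long horizon, divide by the common (by symmetry) number of participations per agent, let the horizon grow, and apply the law of large numbers to each side; this gives $\mathbf{E}[s_a] = \mathbf{E}[ex\_post\_prop_a]$, which is exactly the claim. The main obstacle is the middle step: making the exchangeability argument rigorous. Within a single convoy the agents are emphatically \emph{not} exchangeable — the convoy-initiating agent, and arrival-order effects more generally, break symmetry — so the equality is genuinely a statement about the i.i.d. ensemble rather than any single realization, and one must argue carefully, from anonymity of the mechanism together with the i.i.d. structure of each agent's availability windows and perceived arrival rates, both that $\mathbf{E}[s_a]$ is well defined and agent-independent and that participation frequencies equalize in the limit. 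The remaining pieces (the integral reformulation, the conservation count, the invocation of the law of large numbers) are routine.
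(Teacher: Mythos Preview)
Your proposal is correct and considerably more rigorous than the paper's own argument, which occupies two sentences: by i.i.d.\ and the law of large numbers, long-run averages converge to expectations; by symmetry, each agent has the same probability of leading and hence the same expected lead time. The paper stops there. What you add, and what the paper leaves implicit, is the per-game conservation identity $\sum_i s_i = \sum_i ex\_post\_prop_i = |T|$ obtained from your integral reformulation. This is the piece that actually connects ``equal expected lead times across agents'' to ``expected lead time equals expected ex-post proportional share'': symmetry alone only tells you that $\mathbf{E}[s_a]$ and $\mathbf{E}[ex\_post\_prop_a]$ are each agent-independent constants, not that the two constants coincide. Your approach therefore closes a gap the paper glosses over, at the cost of needing the exchangeability/equal-participation-frequency step you correctly flag as the delicate part; the paper simply asserts that step without comment.
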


\begin{proof}
	
	If all the agents use the Repeated Game Load Balancing mechanism in repeated games, due to the law of large numbers, and the assumption that availability periods and arrival rates are i.i.d, the average of the results obtained from a large number of convoy formations is equal to the expected value. Each agent has the same probability for leading as all the others, hence, in expectation they will all lead for the same amount of time and follow for the same amount of time in any size of convoys.
\end{proof}

\section{Appendix for Section 6.2.2 - Experiments of Repeated Game Mechanism}
\label{app:appendix6.2.2}

While the Repeated Game Load Balancing mechanism is fair in expectation, for realistic applications such as convoy formation, it is important to get a sense of how fast does the actual lead time converge to the ex-post proportional lead time, i.e, how many convoys does an agent need to participate in, in order for its ratio of actual to ex-post proportional lead time to converge to 1. For that reason we created a software simulation environment. 

\begin{figure}[h]
	\centering
	\includegraphics[scale=0.33]{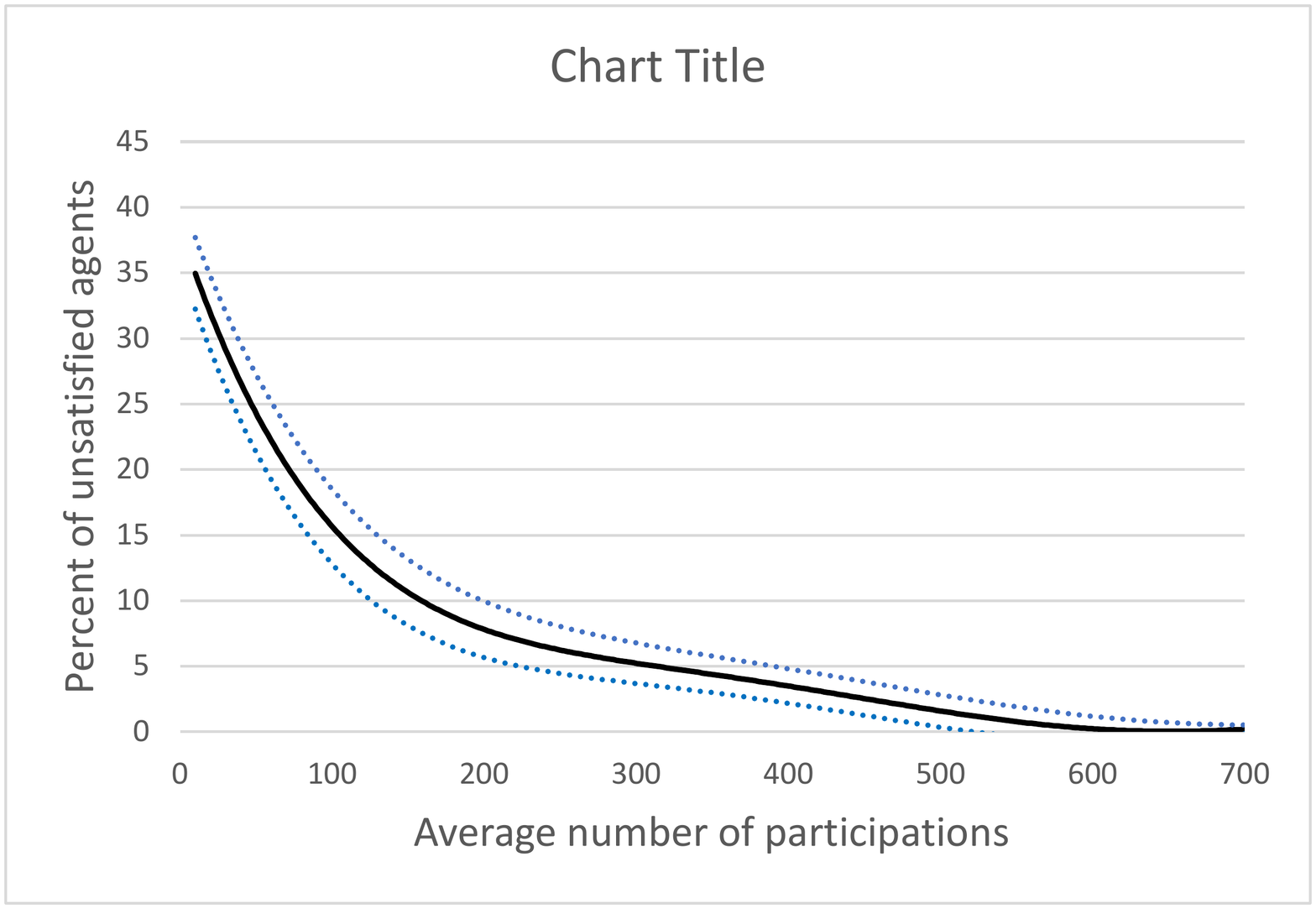}
	\caption{Simulation results for the Repeated Game Load Balancing mechanism.}
	\label{fig:ratio}
\end{figure}

The experimental setup has 100 stations where vehicles can enter and exit. The stations are uniformly distributed on a ring road with a total length of 100km. 
100 vehicles are randomly distributed over these 100 stations (using a uniform distribution).
A single convoy cycles through the stations, and when it reaches a station, the vehicles in that station have a 0.1 probability of entering the road and joining the convoy.
The distance they join for is also randomly generated from a uniform distribution $[0,100]$. Once a vehicle completes its distance, it leaves the convoy and parks at its destination station until the convoy passes it again, at which point it has a 0.1 probability of rejoining.
If multiple vehicles join at the same station, their order is randomized since the last one in becomes the leader.
Leaders are replaced when another vehicle joins or when they reach their destination. 
For every section between consecutive stations, every vehicle in the convoy accumulates $\frac{1}{n}$ to its proportional share, and the leader also accumulates 1 to its actual share.
Whenever a vehicle exits, the accumulated actual and ex-post proportional shares are recorded, and added to the list of convoys that the vehicle has participated in.
We measure how many participations on average it takes for the ratio of actual over ex-post proportional share to converge to 1. 
Specifically, we measure what is the percentage of vehicles that lead for $10\%$ or more than what they should have, which we refer to as \emph{unsatisfied agents}.

The results indicate that it takes at least 200 participations per vehicle on average to get fewer than $10\%$ unsatisfied agents.
After participating in 700 convoys on average, there are no vehicles who lead more than $10\%$ of their ex-post proportional share. 
The results are shown in Figure \ref{fig:ratio}. The dashed lines represent one standard deviation above and below the mean.

\end{document}